\newcommand{\set}[1]{\ensuremath{\left\{#1\right\}}}
\newcommand{\prob}[1]{\ensuremath{P\left(#1\right)}}
\newtheorem{problem}{Problem}
\newcommand{\astrid}{\textsc{astrid}\xspace}
\newcommand{\goldeneye}{\texttt{Golden\-Eye}\xspace}
\newcommand{\nb}{na\"ive Bayes\xspace}
\newtheorem{hypothesis}{Hypothesis}
\newcommand{\bigoh}{\ensuremath{\mathcal{O}}}
\newcommand{\squishlist}{
 \begin{list}{$\bullet$}
  { \setlength{\itemsep}{0pt}
     \setlength{\parsep}{3pt}
     \setlength{\topsep}{3pt}
     \setlength{\partopsep}{0pt}
     \setlength{\leftmargin}{1.5em}
     \setlength{\labelwidth}{1em}
     \setlength{\labelsep}{0.5em} } }
\newcommand{\squishend}{
  \end{list}  }
\definecolor{cviolet}{HTML}{57068c}
\definecolor{cgreen}{HTML}{b4be00}
\definecolor{cturquoise}{HTML}{00b0ca}
\definecolor{corange}{HTML}{ff5800}
\definecolor{cpinklight}{HTML}{f3a2c8}
\definecolor{cbluelight}{HTML}{ccd8e4}
\begin{document}
\title{Finding Statistically Significant Attribute Interactions}
%\titlenote{Produces the permission block, and
%  copyright information}
%\subtitle{Extended Abstract}
%\subtitlenote{The full version of the author's guide is available as
%  \texttt{acmart.pdf} document}

\author{Andreas Henelius}
%% \orcid{}
\affiliation{%
  \institution{Finnish Institute of\\ Occupational Health}
\streetaddress{P.O. Box 40}
  \city{00251 Helsinki} 
  \state{Finland}
}
\email{andreas.henelius@ttl.fi}

\author{Kai Puolam{\"a}ki}
%% \orcid{}
\affiliation{%
  \institution{Finnish Institute of\\ Occupational Health}
\streetaddress{P.O. Box 40}
  \city{00251 Helsinki} 
  \state{Finland} 
}
\email{kai.puolamaki@ttl.fi}

\author{Antti Ukkonen}
%% \orcid{}
\affiliation{%
  \institution{Finnish Institute of\\ Occupational Health}
\streetaddress{P.O. Box 40}
  \city{00251 Helsinki} 
  \state{Finland} 
}
\email{antti.ukkonen@ttl.fi}

% The default list of authors is too long for headers}
\renewcommand{\shortauthors}{A. Henelius et al.}

\begin{abstract}
In many data exploration tasks it is meaningful to identify groups of attribute interactions that are specific to a variable of interest. For instance, in a dataset where the attributes are medical markers and the variable of interest (class variable) is binary indicating presence/absence of disease, we would like to know which medical markers interact with respect to the binary class label. These interactions are useful in several practical applications, for example, to gain insight into the structure of the data, in feature selection, and in data anonymisation. We present a novel method, based on statistical significance testing, that can be used to verify if the data set has been created by a given factorised class-conditional joint distribution, where the distribution is parametrised by a partition of its attributes. Furthermore, we provide a method, named \astrid, for automatically finding a partition of attributes describing the distribution that has generated the data. State-of-the-art classifiers are utilised to capture the interactions present in the data by systematically breaking attribute interactions and observing the effect of this breaking on classifier performance. We empirically demonstrate the utility of the proposed method with examples using real and synthetic data.
\end{abstract}

%
% The code below should be generated by the tool at
% http://dl.acm.org/ccs.cfm
% Please copy and paste the code instead of the example below. 
%
\begin{CCSXML}
	<ccs2012>
	<concept>
	<concept_id>10002951.10003227.10003351</concept_id>
	<concept_desc>Information systems~Data mining</concept_desc>
	<concept_significance>500</concept_significance>
	</concept>
	</ccs2012>
\end{CCSXML}

\ccsdesc[500]{Information systems~Data mining}

% We no longer use \terms command
%\terms{Theory}

\keywords{attribute interactions, constrained randomisation, hypothesis testing, clustering}

\maketitle

% ===============================================================================

\section{Introduction}
\label{sec:introduction}
It is often of interest to understand the attribute interactions in a
dataset that are specific to a variable of interest. As an example,
consider a dataset where the attributes are medical markers and
a binary class label indicates presence or absence of a
disease. Here we are interested in determining which medical markers
contribute jointly to the diagnosis and we might, for instance, find
that some markers carry important information on their own, while some
attributes need to be considered jointly.

In this paper we consider attribute interactions in the context of
supervised learning. We say that two or more attributes are
\emph{interacting} if they carry complementary information and are
jointly needed for predicting the class of a data item. Knowledge of
the attribute interactions specific to a variable of interest has
several important real-world applications, e.g., in feature selection
and data anonymisation as we show in this paper. Later, we will give
an exact definition of what we mean by interaction in terms of
conditional probability distributions.

However, finding interacting attributes in a dataset in the general
case is not straightforward and requires complex modelling. In this
paper we instead focus on leveraging classifiers to find interacting
attributes, similarly to \cite{henelius:2014:peek}. We may assume that
state-of-the-art high-performing classifiers must at least implicitly
model and utilise these complex attribute interactions, if they are
able to make accurate predictions. This means that if we can observe
which attributes are jointly used by a classifier for predicting class
labels, we can deduce which attributes are interacting.

In this paper we focus on developing a novel method for finding a
disjoint partition (grouping) $\mathcal{S} = \set{S_1, \ldots, S_k}$
of the attributes of a dataset, such that attributes in the same group
$S_i$ are interacting (dependent) given the class, and attributes in
different groups are independent given the class, respectively.

Given a data matrix $X$, where the rows correspond to data items and
the columns to attributes, respectively, and an associated column vector of
class labels $C$, a classifier tries to model the class probabilities
given the data, i.e., to find $\prob{C \mid X} \propto \prob{X \mid C
} \prob{C}$. Here $P \left( X \mid C \right)$ is the
\emph{class-conditional} distribution of the attributes, which we
focus on here.
Formally, we define the grouping $\mathcal{S}$ to
represent a factorisation of $P \left( X \mid C \right)$ into
independent factors, i.e.,
\begin{equation}\label{eq:P} P
\left( X \mid C ; \mathcal{S}\right) = \prod_{S \in \mathcal{S}}
P\left( X_S \mid C \right ),
\end{equation}where $X_S$ only contains the attributes
in the set $S$. In other words, interacting attributes are in the same
group in $\mathcal{S}$ and, hence, in the same factor in $P \left( X
\mid C ; \mathcal{S}\right)$.

Our method is based on
the following intuition. Assume that we train a classifier $f_1$ using
data from an unfactorised distribution where
all attributes might interact,
and that we train a classifier $f_2$ using
data from a factorised distribution where attribute interactions are
constrained so that only attributes in the same factor may interact. Now,
if the classifiers $f_1$ and $f_2$ cannot be distinguished from each
other in terms of performance, it means that the factorisation
correctly captures the class-dependent structure in the data. On the
other hand, if $f_2$ performs worse than $f_1$ it means that some
essential relationships in the data needed by the classifier are no
longer present, i.e., the factorisation is invalid.

In this paper we consider the two problems of (i) testing whether an
attribute grouping $\mathcal{S}$ correctly captures the attribute
interaction structure, and (ii) automatically finding the
maximum-cardinality attribute grouping $\mathcal{S}$ for a dataset that still satisfies Eq. \eqref{eq:P},
corresponding to the maximally factorised joint class-conditional
distribution.
The first problem is solved using a randomisation test
and for solving the second problem we present the novel \astrid
method.

To accomplish the latter task we also need to introduce a novel
clustering method (Sec.~\ref{sec:clustering}) that can find groupings
when only the reward of the clustering is known but, e.g.,
inter-attribute distances cannot be defined.

By solving these problems we gain insight into \emph{the structure of
  the data} and this has important applications in many domains. We
next consider a few examples demonstrating the practical utility of
attribute interactions.

\vspace*{1em}
\noindent\textbf{Running Example} As a running example we use a
\emph{synthetic dataset} $D$ with 4 attributes $a_1, \ldots, a_4$ and
class labels $C$. The dataset is visualised in
Fig.~\ref{fig:data:synthetic}. It has two classes, each containing 500
data points. The dataset is constructed so that attributes $a_1$ and $a_2$
carry meaningful class information only when considered
jointly. Attribute $a_3$ contains some class information whereas
attribute $a_4$ is random noise. The class-conditional attribute
interaction structure is hence given by the grouping $\mathcal{S} =
\set{\set{1,2}, \set{3}, \set{4}}$. For brevity
we here refer to the attributes by their indices. Attributes in
different groups in $\mathcal{S}$ are independent and constitute the
factors of the class-conditional distribution of the attributes
$P$. We next exemplify the importance of attribute interactions and
their applicability by discussing real-world examples.

\begin{figure*}[t]
\centering
  \includegraphics[width=\linewidth]{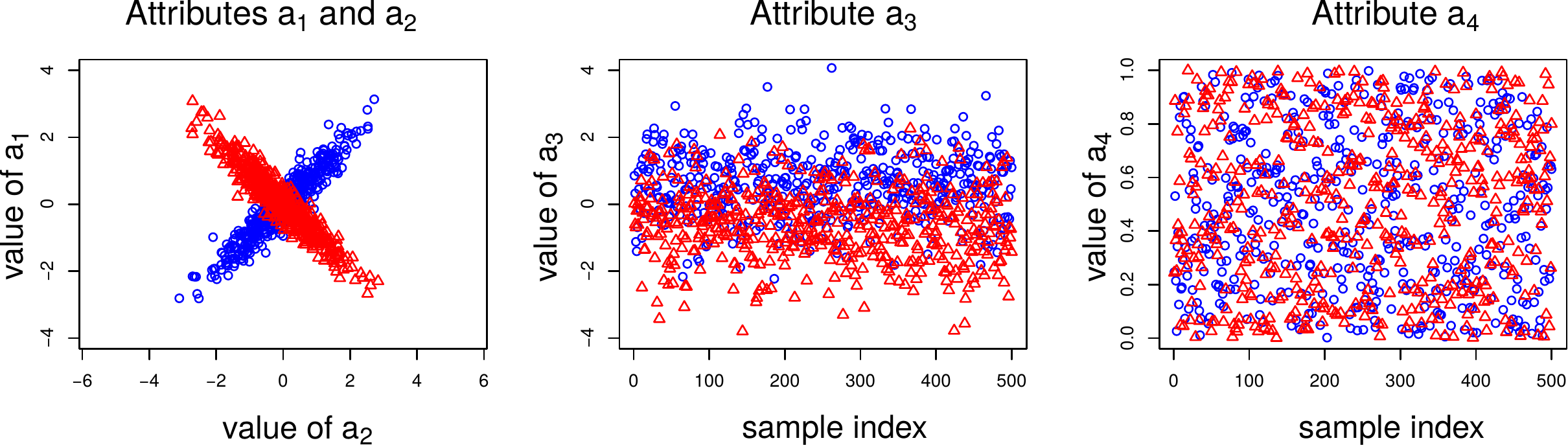}
  \caption{The synthetic dataset used as a running example. Class
    \texttt{0}: blue circles, class \texttt{1}: red triangles.}
\label{fig:data:synthetic}
\end{figure*}

\vspace*{1em}
\noindent\textbf{Example 1.} Attribute interactions have been used,
e.g., in medicine to find attribute combinations that together
constitute risk factors for a procedure \citep{jakulin:2003:b}. In
pharmacovigilance the interactions are important to understand which
drug combinations that can cause adverse drug reactions
\citep{henelius:2015:gepp}. In these cases we want to \emph{identify
  groups of interacting attributes}, i.e., the grouping $\mathcal{S}$.
In Sec.~\ref{sec:results} we show attribute interactions in
a number of real datasets.

\vspace*{1em} 
\noindent\textbf{Example 2.} In several data analysis applications we
need to sample data such that some aspect of the data remains intact,
while the data is otherwise random. E.g., our goal can be to shuffle a
dataset such that the ability of a classifier to accurately make
predictions from the data remains approximately intact. This has
applications in, e.g, creating synthetic datasets for use in model
compression \citep{bucila:2006:a}, or data anonymisation
\citep{AgrawalS00, sweeney:2002:a, BayardoS03}, further exemplified in
Sec.~\ref{sec:results}. As shown in this paper, knowing the attribute
interactions allows us to break attribute interactions in the data
that are not
used by the classifier while the class-conditional joint distribution
remains essentially the same. The important implication of this is
that the classification performance on such a randomised (e.g.,
anonymised) dataset also remains essentially unchanged. We here show
how this can be done with statistical guarantees.

\vspace*{1em}
\noindent\textbf{Example 3.}  An important problem in the analysis of
large datasets is \emph{variable and feature selection} to reduce
training time for models, to reduce the amount of storage space or to
improve classification performance \citep{guyon:2003:a}. If we know
which groups of attributes are interacting with respect to the
classes, it makes sense to perform variable subset selection from
among these groups, thus keeping attribute interactions intact so that
the classifier can exploit them. We demonstrate this in
Sec.~\ref{sec:results}.
\vspace*{0.5em}

\subsection{Contributions}
Our contributions are: \squishlist
\item We show how groups of interacting attributes are related to a
  factorisation of the conditional data distribution.
\item We present and study the two problems of (i) assessing whether a
  particular grouping of attributes represents the class-conditional
  structure of a dataset using statistical hypothesis testing
  (Sec. \ref{sec:1group}) and (ii) automatically discovering the
  attribute grouping of highest granularity, with \textsc{a}utomatic
  \textsc{str}ucture \textsc{id}entification (\astrid)
  (Sec. \ref{sec:automatically}).
\item We present a novel polynomial time clustering algorithm that relies on
  a certain monotonic, but in practice very intuitive score
  function (Sec. \ref{sec:clustering}).
\item We present an experimental verification and discuss several
  real-world data analysis scenarios that become possible through
  knowledge of the class-conditional joint data distribution
  (Secs. \ref{sec:experiments}--\ref{sec:discussion}).  \squishend

% --------------------------------------------------
\section{Related Work}
\label{sec:relatedwork}
This work has been motivated and influenced by the recently introduced
\goldeneye algorithm \cite{henelius:2014:peek,henelius:2015:gepp}, as
well as the results about classifiers and attribute interactions in
\cite{ojala2010jmlr}.  \cite{henelius:2014:peek} and the method
introduced in this paper share the same randomisation scheme, but
ultimately address different problems: \cite{henelius:2014:peek}
reveals the structure of a classifier function (even the structure
imposed by overlearning), while the approach presented here reveals
only the structure really present in the data {\em and} that can be
modelled by training the classifier. As opposed to the heuristic used
in \cite{henelius:2014:peek}, we approach the problem in a principled
way using statistical significance testing, constituting a
considerable improvement over the previous method.

The main result in this paper is a randomisation test for testing the
hypothesis that an observed dataset has been sampled from a given
class-conditional joint distribution with a particular factorised
form. The problem considered here is closely related to the
permutation test in \cite{ojala2010jmlr}, but instead of only
considering a fully factorised class-conditional distribution (i.e.,
the one assumed by the \nb classifier), the test considered here is
valid for any given factorisation. As a consequence our method can be
used to reveal the attribute interactions present in the data in terms
of the class-conditional joint distribution. Our second contribution
is an algorithm for automatically determining the structure of the
data in terms of finding the factorisation with the highest
granularity such that a classifier trained using the factorised data
is indistinguishable from a classifier trained using the original
data.

Moreover, various methods to study attribute interactions in general
have been proposed, see, e.g., \cite{freitas:2001:a} for a review on
the topic in data mining. Applications to feature selection have been
studied in, e.g., \cite{zhao:2007:a, zhao:2009:a}. \cite{tatti:2011:a}
investigated finding maximally dependent successively ordered
attributes while \cite{mampaey:2013:a} clustered correlated attributes
into groups. \cite{jakulin:2003:a} proposed a method for quantifying
the degree of interaction and \cite{jakulin:2004:a} consider
factorising the joint data distribution and presented a method for
testing the significance of the found attribute interactions
(experiments limited to two and three-way interactions). Investigating
the structure of the data in terms of factorising the joint
distribution is also the topic of Bayesian network learning (e.g.,
\cite{koski:2012:a}).

% ==================================================
\section{Verifying a Single Grouping}
\label{sec:1group}
We first introduce the necessary notation, after which we present a
hypothesis testing framework for studying a given attribute grouping.

\subsection{Preliminaries}
Let $X$ be an $n \times m$ data matrix, where $X(i,\cdot)$ denotes the
$i$th row (item), $X(\cdot,j)$ the $j$th column (attribute) of $X$,
and $X(\cdot,S)$ the columns of $X$ given by $S$, where $S\subseteq
[m]=\{1,\ldots,m\}$, respectively.  Let $\mathcal{C}$ be a finite set
of class labels and let $C$ be an $n$-vector of class labels, such
that $C\left(i\right)$ gives the class label for $X(i,\cdot)$. We
denote a dataset $D$ by the tuple $D=\left(X,C\right)$ and the set of
all possible datasets by $\mathcal{D}$.

We denote by $\mathcal{P}$ the set of disjoint partitions of
$[m]=\{1,\ldots,m\}$, where a partition $\mathcal{S}\in\mathcal{P}$
satisfies $\cup_{S\in\mathcal{S}}{S}=[m]$ and for all $S,S'\in\mathcal{S}$
either $S=S'$ or $S\cap S'=\emptyset$, respectively.

Here we assume that the dataset has been sampled i.i.d., i.e., the dataset
$D$ follows a joint probability distribution given by
\begin{equation}
  \label{eq:jdfull}
  \begin{array}{lcl}
\prob{D} &=&
\prod_{i\in[n]}{P(X\left(i,\cdot\right),C\left(i\right))}\\
&=&\overbrace{\prod_{i \in \left\lbrack n \right\rbrack} \prob{X\left(i,\cdot \right) \mid C(i)}}^{\prob{X  \mid C}} \prob{C \left( i \right)},
  \end{array}
\end{equation}
where $\prob{X\mid C}$ is the \emph{class-conditional
  distribution}. We consider a factorisation of $\prob{D}$ into
class-conditional factors given by the grouping $\mathcal{S}\in\mathcal{P}$ and write
\begin{equation}
\label{eq:jdfact}
  \prob{D} = \overbrace{\prod_{i \in \left\lbrack n \right\rbrack} \prod_{S\in \mathcal{S}} \prob{X \left(i, S \right) \mid C \left( i \right)}}^{\prod_{S\in\mathcal{S}}{\prob{X(\cdot,S)  \mid C}}} \prob{C \left( i \right)} .
\end{equation}
Given an observed dataset $D_0 \in \mathcal{D}$, we want to
investigate the structure of the data in terms of groupings
$\mathcal{S}\in\mathcal{P}$ and a natural approach is to formulate a null
hypothesis:
\begin{hypothesis}
\label{hyp:null}
  The observed dataset $D_0$ has been sampled from a distribution
  given by Eq.~\eqref{eq:jdfact} with the grouping given by
  $\mathcal{S}\in\mathcal{P}$.
\end{hypothesis}
We now devise a framework to test this hypothesis.

\subsection{Hypothesis Testing Framework}
Hypothesis~\ref{hyp:null} states that the dataset $D_0$ has been
sampled from a distribution that follows the form given by
Eq.~\eqref{eq:jdfact} with the groups given by $\mathcal{S}$. This
hypothesis can be evaluated empirically using a randomisation test,
for which we need (i) a test statistic and (ii) the distribution of
the test statistic under the null hypothesis. The value of the test
statistic for the observed data is compared to the distribution of the
test statistic under the null hypothesis. The outcome of the
comparison is typically reported in the form of a $p$-value denoting
the probability of obtaining a result at least as extreme as the
observed one under the null hypothesis. Inference regarding the
hypothesis is then made at a significance level $\alpha$ denoting the
probability of a Type I error.

\subsubsection{Test Statistic}
Assume for now that the test statistic yields a real number for each
dataset in $\mathcal{D}$, i.e., $T : \mathcal{D} \mapsto
\mathbb{R}$. The exact form of the test statistic we use is
described in detail later in Sec. \ref{sec:class} after we have
presented the general framework.

\subsubsection{\goldeneye Permutation}

In this paper we sample datasets using the \goldeneye permutation,
first described by Henelius et al.
\cite{henelius:2014:peek}. The \goldeneye permutation is parametrised
by a grouping $\mathcal{S}\in\mathcal{P}$ and creates a sample from
the set of all datasets by permuting the columns of the original data
within-class so that columns in the same group $S\in\mathcal{S}$ are
permuted together.

More formally, a new permuted dataset
$D^\mathcal{S}=\left(X^\mathcal{S},C\right)$ is created by permuting
the data matrix of the dataset $D_0=\left(X_0,C\right)$ at random. The
permutation is defined by $m$ bijective permutation functions $\pi_j:
[n]\mapsto [n]$ sampled uniformly at random from the set of allowed
permutations functions. The new data matrix is then given by
$X^\mathcal{S}\left(i,j\right)=X_0\left(\pi_j\left(i\right),j\right)$. The
allowed permutation functions satisfy the following constraints for all
$i\in [n]$, $j,j'\in[m]$, and $S\in\mathcal{S}$:
\begin{enumerate}
\item permutations are within-a class, i.e.,
  $C\left(i\right)=C\left(\pi_j\left(i\right)\right)$, and
  \item items within a group are permuted together, i.e., $j\in
    S\wedge j'\in S\implies
    \pi_j\left(i\right)=\pi_{j'}\left(i\right)$.
\end{enumerate}

Let $\mathcal{D}_\mathcal{S} \subseteq \mathcal{D}$ be the set of
datasets that can be generated using the \goldeneye permutation with
the grouping $\mathcal{S}$. We make the following two observations.
\begin{lemma}\label{lem:g1}
  Each invocation of the \goldeneye permutation produces each of the
  datasets in $\mathcal{D}_\mathcal{S}$ with uniform probability.
\end{lemma}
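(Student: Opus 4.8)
The plan is to reduce the family of $m$ permutation functions to a single permutation per group and then to settle the claim by a counting argument based on cosets of a stabiliser subgroup. First I would use constraint (2) to observe that every admissible tuple $(\pi_1,\dots,\pi_m)$ is completely determined by choosing one permutation $\sigma_S$ for each group $S\in\mathcal{S}$, with $\pi_j=\sigma_S$ whenever $j\in S$. Constraint (1) says each $\sigma_S$ is a within-class permutation, i.e. an element of the product group $G=\prod_{c\in\mathcal{C}}\mathrm{Sym}(\{i : C(i)=c\})$. Because the $\pi_j$ are drawn uniformly from the admissible set, this is the same as drawing the $|\mathcal{S}|$ permutations $\sigma_S$ independently and uniformly from $G$. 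Thus one invocation corresponds to a uniformly random tuple $(\sigma_S)_{S\in\mathcal{S}}\in G^{|\mathcal{S}|}$, and the resulting dataset has $X^{\mathcal{S}}(i,j)=X_0(\sigma_{S}(i),j)$ for $j\in S$.

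The key difficulty is that the map from tuples to datasets need not be injective: if two rows in the same class agree on all columns of a group $S$, then the two permutations that swap them yield identical data on those columns, so I cannot simply count tuples. Instead I would fix one group $S$ and count how many $\sigma\in G$ realise a given achievable arrangement of the columns $X_0(\cdot,S)$. Writing $\phi_S(\sigma)$ for the matrix whose $i$th row is $X_0(\sigma(i),S)$, the set $H_S=\{\sigma\in G : \phi_S(\sigma)=\phi_S(\mathrm{id})\}$ is a subgroup of $G$, and a short computation shows $\phi_S(\sigma)=\phi_S(\sigma')$ iff $\sigma(\sigma')^{-1}\in H_S$. Hence the preimage of every achievable arrangement is a right coset $H_S\sigma'$ of $H_S$, and all cosets of $H_S$ have the same cardinality $|H_S|$. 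Therefore each achievable arrangement on the columns of $S$ is produced by exactly $|H_S|$ elements of $G$, a number that does not depend on which achievable arrangement is targeted.

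Since the $\sigma_S$ are chosen independently across groups, every dataset $D'\in\mathcal{D}_{\mathcal{S}}$ — which is specified by exactly one achievable arrangement per group — is produced by precisely $\prod_{S\in\mathcal{S}}|H_S|$ tuples, and this count is independent of $D'$. As each of the $\prod_{S\in\mathcal{S}}|G|$ tuples is equally likely, every $D'\in\mathcal{D}_{\mathcal{S}}$ receives probability $\prod_{S\in\mathcal{S}}|H_S|/|G|$, a constant, which establishes uniformity. I expect the main obstacle to be exactly the non-injectivity caused by repeated rows; the coset argument is what neutralises it, and the point requiring care is confirming that the fibre size $|H_S|$ is genuinely the same for every achievable arrangement, so that the per-dataset multiplicity factors out cleanly.
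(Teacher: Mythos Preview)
Your argument is correct, and in fact it is considerably more careful than what the paper offers: the paper's entire proof of this lemma (together with the companion Lemma~\ref{lem:g2}) is the single sentence ``The proofs follow directly from the definition of the permutation and the probability distribution of Eq.~\eqref{eq:jdfact}.'' The authors treat the statement as immediate and do not discuss the non-injectivity issue you raise.

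Your route is genuinely different in that it actually engages with the subtlety of repeated within-class rows on a group's columns. The coset/stabiliser argument is the right tool: showing that the fibres of $\phi_S$ are right cosets of $H_S$ gives a uniform fibre size independent of the target arrangement, and independence across groups then lets you multiply. What this buys over the paper's treatment is an honest handling of the many-to-one map from permutation tuples to datasets; the paper implicitly assumes (or leaves to the reader) that this multiplicity is constant across $\mathcal{D}_{\mathcal{S}}$. One minor presentational point: you might also note explicitly that $\mathcal{D}_{\mathcal{S}}$ is the Cartesian product over $S\in\mathcal{S}$ of the achievable column-block arrangements, which is what licenses combining the per-group counts multiplicatively; you use this when you say a dataset is ``specified by exactly one achievable arrangement per group,'' and it holds precisely because the $\sigma_S$ range independently over $G$.
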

\begin{lemma}\label{lem:g2}
  The datasets in $\mathcal{D}_\mathcal{S}$ have equal probability under the
  distribution of Eq. \eqref{eq:jdfact}, parametrised by $\mathcal{S}$.
\end{lemma}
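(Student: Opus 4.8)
The plan is to show that the probability assigned by Eq.~\eqref{eq:jdfact} is constant on the set $\mathcal{D}_\mathcal{S}$, i.e.\ that every allowed \goldeneye permutation leaves $\prob{D}$ unchanged. The guiding observation is that the factorised distribution depends on the data only through certain per-class, per-group multisets, and that these multisets are exactly what the permutation preserves.

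First I would separate the two factors in Eq.~\eqref{eq:jdfact}, writing $\prob{D}=\left(\prod_{i\in[n]}\prob{C(i)}\right)\prod_{i\in[n]}\prod_{S\in\mathcal{S}}\prob{X(i,S)\mid C(i)}$. Because every allowed permutation satisfies $C\left(\pi_j(i)\right)=C(i)$, the class vector $C$ is identical in $D_0$ and in any $D^\mathcal{S}\in\mathcal{D}_\mathcal{S}$, so the first factor $\prod_{i\in[n]}\prob{C(i)}$ is trivially invariant; it remains to handle the class-conditional factor.

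Next I would reorder the double product as $\prod_{S\in\mathcal{S}}\prod_{c\in\mathcal{C}}\prod_{i:\,C(i)=c}\prob{X(i,S)\mid c}$, grouping the rows by class. Fix a group $S$ and a class $c$. By constraint~(2) all columns $j\in S$ share a single permutation $\pi_S:=\pi_j$, and by constraint~(1) this $\pi_S$ maps the index set $\set{i:C(i)=c}$ bijectively onto itself. Hence the block $X_0(\cdot,S)$ restricted to the class-$c$ rows is merely reordered row-wise by $\pi_S$: each row-vector $X(i,S)$ is moved intact to another class-$c$ row, so the multiset $\set{X(i,S):C(i)=c}$ is the same for $D_0$ and $D^\mathcal{S}$. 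Since the inner product $\prod_{i:\,C(i)=c}\prob{X(i,S)\mid c}$ depends only on this multiset and not on the order of its factors, it is unchanged; taking the product over all $S$ and all $c$ shows the class-conditional factor is invariant too, whence $\prob{D_0}=\prob{D^\mathcal{S}}$ for every $D^\mathcal{S}\in\mathcal{D}_\mathcal{S}$.

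The step I expect to require the most care is the claim that an allowed permutation moves whole row-vectors $X(i,S)$ intact within a class, which is what makes the per-group multisets genuinely invariant. This rests on combining both constraints: constraint~(2) forces one common permutation per group --- otherwise the entries within a row could be scrambled across the columns of $S$ and the row-vector destroyed --- while constraint~(1) ensures this permutation respects the class partition. Once this is stated precisely, the result is immediate from the order-independence of finite products.
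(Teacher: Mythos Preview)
Your argument is correct and is precisely the detailed unpacking of what the paper asserts: the paper's own proof of this lemma is the single sentence ``The proofs follow directly from the definition of the permutation and the probability distribution of Eq.~\eqref{eq:jdfact},'' and your decomposition into per-class, per-group multisets (preserved by constraints (1) and (2)) is exactly the content of that claim made explicit. There is no genuine difference in approach, only in level of detail.
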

\begin{proof}
  The proofs follow directly from the definition of the permutation
  and the probability distribution of Eq. \eqref{eq:jdfact}.
\end{proof}

It follows from Lemmas~\ref{lem:g1} and \ref{lem:g2} that if the
original dataset $D_0$ is from the distribution defined by
Eq.~\eqref{eq:jdfact} then it is exchangeable with the datasets
sampled using \goldeneye, hence, an empirical $p$-value defined as
follows is valid, i.e., it is stochastically larger than the unit
distribution in $[0,1]$ under the null hypothesis that the data
originates from the distribution given by Eq.~\eqref{eq:jdfact},
\begin{equation}
\label{eq:empiricalp}
p_\mathcal{S} = \frac{1 + \sum_{i=1}^R I \left\lbrack T\left( D_i^\mathcal{S} \right) \geq T
  \left( D_0\right) \right\rbrack}{1 + R},
\end{equation}
where $I\left\lbrack \Box \right\rbrack$ is the indicator function
which equals unity if $\Box$ is true and zero otherwise,
$T:\mathcal{D}\mapsto{\mathbb{R}}$ is the test statistic, $D_0$ is the
original observed dataset, and $D_i^\mathcal{S} \in
\mathcal{D}_\mathcal{S}$, where $i\in [R]$, are $R$ samples created by
the \goldeneye permutation parametrised by $\mathcal{S}$. If we here
obtain $p_\mathcal{S}<\alpha$, we can reject the null hypothesis that
the dataset $D_0$ could have been generated by the distribution given
by Eq.~\eqref{eq:jdfact} with a significance level $\alpha$. We next
specify the exact form of the test statistic $T$.

\subsection{Hypothesis Testing Using Classifiers}
\label{sec:class}

The above described framework is valid for any test statistic $T$, but
a poor choice of $T$ could result in an unnecessarily large number of
Type II errors, i.e., failure to detect an interaction of
attributes. The test statistic should reflect how well $\mathcal{S}$
captures the structure of the data. As discussed in
Sec. \ref{sec:introduction}, it is reasonable to assume that
high-performing classifiers internally model the class-conditional
joint distribution of the attributes (Eq.~\eqref{eq:jdfact}). A
classifier is a function that tries to predict classes given a row
from a data matrix and is typically generated using a training dataset
containing both the data matrix and the class vector
$D=\left(X,C\right)$. We denote a classifier trained using the dataset
$D$ by $f_D:\mathcal{X}\mapsto\mathcal{C}$, where $\mathcal{X}$
denotes the set of all possible rows of the data matrix. Further
assume that we have a separate independent test dataset from the same
distribution as $D_0$, denoted by
$D_\mathrm{test}=\left(X_\mathrm{test},C_\mathrm{test}\right)$. With
these components, we define the test statistic as
\begin{definition}\emph{Test Statistic}
  \label{def:teststatistic}
  Given the above definitions, the test statistic for a dataset
  $D\in\mathcal{D}$ is given by
  \begin{equation}
    \label{eq:teststatistic}
    T\left(D\right)=\frac 1{n_\mathrm{test}}
    \sum_{i=1}^{n_\mathrm{test}}{ I\left\lbrack
      f_D\left(X_\mathrm{test}\left(i,\cdot\right)\right)=C_\mathrm{test}\left(i\right)
      \right\rbrack},
  \end{equation}
  where $n_\mathrm{test}$ is the number of items in the test dataset.
\end{definition}
We have here chosen, for simplicity to use accuracy, but other
performance metrics could be used as well, e.g., the $F_1$ measure.

Finally, we cast the above presented hypothesis testing procedure in
the form of a problem:
\begin{problem}
\label{prob:structuretest}
Given an observed dataset $D_0 \in \mathcal{D}$, a grouping
$\mathcal{S}$, and a classifier $f$, determine at a level $\alpha \in
\lbrack 0, 1\rbrack$ if $D_0$ has
been sampled from a distribution given by Eq.~\eqref{eq:jdfact}
such that the factors are given by $\mathcal{S}$.
\end{problem}
To solve Prob.~\ref{prob:structuretest} we proceed as follows: (i) use
the test statistic of Definition~\ref{def:teststatistic} with the
classifier $f$, (ii) determine the distribution of the test statistic
under the null hypothesis from datasets generated from the observed
dataset using the \goldeneye permutation parametrised by the grouping
$\mathcal{S}$, and finally (iii) compute the $p$-value in
Eq.~\eqref{eq:empiricalp}, after which we evaluate the null hypothesis
at the significance level $\alpha$. If we find
$p_\mathcal{S}\geq\alpha$ we conclude that the null hypothesis cannot
be rejected and it cannot be ruled out that $D_0$ originates from a
distribution given by Eq.~\eqref{eq:jdfact} with groups given by
$\mathcal{S}$.

\subsubsection{About the Statistical Significance Testing Formulation}
We have formulated our problem as a statistical hypothesis testing
problem. The null hypothesis is that the observed data originates from
a distribution that is of the form defined by
Eq. \eqref{eq:jdfact}. If we obtain a $p$-value less than $\alpha$ we can
reject the null hypothesis at the given significance level; this means
that we have some evidence of the fact that the attributes actually
exhibits some interactions that were broken by the given
grouping. However, in the opposite case when the $p$-value stays above
$\alpha$, we can only conclude that we cannot reject the null
hypothesis, but we cannot necessarily conclude that the null hypothesis is
true. As a simple example, if we use the \nb classifier as our
classifier function then any grouping will typically obtain a high
$p$-value (i.e., we cannot reject the null hypothesis), as shown
later, e.g., in Tab.~\ref{res:tab:synthetic:nb}, independent of the
actual structure present in the data; this behavior results simply
from the fact that the \nb classifier models data by a conditionally
independent distribution and hence, it provides a poor test statistic
for our purposes. Therefore, if we try to find a maximum cardinality
grouping for which the null hypothesis cannot be rejected, then we tend
to err (Type II error) towards a grouping of higher cardinality if the
test statistic (here constructed by using a classifier) fails to capture the structure present in the data.

% ------------------

\section{Automatically Finding Groupings (\astrid)}
\label{sec:automatically}

The above described method allows us to test whether a
\emph{particular grouping} $\mathcal{S}$ describes the structure of
the data in terms of the factorisation in Eq.~\eqref{eq:jdfact}. The
following problem is a natural extension of the above discussion:
\begin{problem}
  \label{prob:structureid}
Given an observed dataset $D_0$ and a classifier $f$, find the
grouping $\mathcal{S}$ of cardinality $k$ such that the accuracy is
maximised.
\end{problem}

Instead of specifying the cardinality $k$ in
Prob.~\ref{prob:structureid} in advance, we can also use the
confidence level $\alpha$ as a heuristic for model selection as
follows. We first find the optimal partitions for $k = 1, \ldots, m$,
after which we compute the $p$-value using Eq.~\eqref{eq:empiricalp}
for all the $m$ partitions. The desired solution is given by the
highest-cardinality grouping satisfying $p_\mathcal{S}\geq\alpha$.

Interestingly, Prob.~\ref{prob:structureid} can be viewed as an
instance of a generic clustering problem:
\begin{problem}\label{prob:clustering}
  Given integers $k$ and $m$ and a reward function $\hat
  T:\mathcal{P}\mapsto{\mathbb{R}}$, where $\mathcal{P}$ is the set of
  all partitions of $[m]$, find a partition
  $\mathcal{S}\in\mathcal{P}$ of size $|\mathcal{S}|=k$ such that the
  reward $\hat T(\mathcal{S})$ is maximised.
\end{problem}
Prob.~\ref{prob:structureid} reduces to Prob.~\ref{prob:clustering} if
we use the expected accuracy $T$ defined in
Eq.~\eqref{eq:teststatistic} as the the reward function $\hat T$ in
the clustering problem;
\begin{equation}\label{eq:that}
  \hat T(\mathcal{S})=\frac 1{R'}\sum_{i=1}^{R'}T\left(D^\mathcal{S}_i\right),
\end{equation}
where $D^\mathcal{S}_i$, $i\in [R']$, is a random dataset produced by
the \goldeneye permutation parametrised by $\mathcal{S}$. We provide a
polynomial time heuristic algorithm to solve
Prob. \ref{prob:clustering}. The algorithm yields the exact solution
to the problem if the accuracy is monotonic (Def.~\ref{def:mono} and
Theorem \ref{thm:optimal} below). The clustering algorithm and its
properties are described in more detail later in
Sec.~\ref{sec:clustering}.

The monotonicity of the accuracy means that if $\mathcal{S}_0$ is the
solution to Prob. \ref{prob:structureid}, then the accuracy $\hat
T(\mathcal{S})$ is reduced if any group in $\mathcal{S}_0$ has been
broken in $\mathcal{S}$. More specifically, let
$\mathcal{S}=\{S_1,\ldots,S_k\}$ be a partition of $[m]$, and let
$\mathcal{S}'=\{S_{1a},S_{1b},S_2,\ldots,S_k\}$, where $S_1=S_{1a}\cup
S_{1b}$ and $S_{1a}\cap S_{1b}=\emptyset$. Assume there exist a group
$Q\in\mathcal{S}_0$ such that $Q\cap S_1=Q$, but $Q\cap S_{1a}\ne Q$
and $Q\cap S_{1b}\ne Q$, i.e., splitting $S_1$ into $S_{1a}$ and
$S_{1b}$ has broken at least the group $Q$ in $S_0$. The monotonicity
of the accuracy implies here that $\hat
T\left(\mathcal{S}'\right)<\hat T\left(\mathcal{S}\right)$, i.e.,
breaking the interactions in $Q$ means that the classifier cannot
utilise them fully, which is expected to reduce classification
accuracy. If the monotonicity is preserved to a sufficient accuracy,
then we expect that the clustering algorithm can efficiently and
accurately provide a solution to Prob.~\ref{prob:structureid}.

Solving Prob.~\ref{prob:structureid} for all $k$ requires evaluation
of the accuracy $\hat T(\mathcal{S})$ for $\bigoh\left(m^2\right)$
different values of $\mathcal{S}$. If we want to further find the
highest-cardinality grouping satisfying $p_\mathcal{S}\ge\alpha$ then
$\bigoh\left(m\right)$ $p$-value computations are additionally needed
which, however, does not increase the computational complexity
compared to just solving Prob.~\ref{prob:structureid}.

\subsection{Clustering Problem}
\label{sec:clustering}
As discussed in Sec.~\ref{sec:automatically}, to find an optimal
grouping we need to solve a generic clustering problem
(Prob.~\ref{prob:clustering}).

If no assumptions of the reward function are made then verifying that
a given partition is a solution to Prob.~\ref{prob:clustering}
requires evaluation of $\hat T\left(\mathcal{S}\right)$ for all
partitions $S\in\mathcal{P}$ of size $k$. In order to provide a polynomial
time algorithm to solve the problem we hence need to make some
assumptions regarding the form of the reward function. We can indeed
devise an efficient and exact algorithm if $\hat T$ behaves
consistently in the sense that the reward function decreases if any of
the clusters in the (a priori unknown) solution to
Prob.~\ref{prob:clustering} are broken. We call this property {\em
  monotonicity} and define it formally as follows.
\begin{definition}\label{def:mono}
  For given integers $k$ and $m$ and a reward function $\hat T$, as
  defined above, let $\mathcal{S}_0$ be the solution to
  Prob.~\ref{prob:clustering}. The reward function $\hat T$ is {\em
    monotonic} iff all $\mathcal{S},\mathcal{S}'\in\mathcal{P}$
  satisfying $F\left(\mathcal{S}\right)\subset
  F\left(\mathcal{S}'\right)$ also satisfy $\hat
  T\left(\mathcal{S}\right)<\hat T\left(\mathcal{S}'\right)$, where we
  have used $F(\mathcal{S})=\left\{X\in\mathcal{S}_0\mid\exists
  Y\in\mathcal{S}\ldotp X\subseteq Y\right\}$.\footnote{If for example
    $\mathcal{S}_0=\{\{1,2\},\{3,4\}\}$ then
    $F\left(\{\{1,2,3\},\{4\}\}\right)=\{\{1,2\}\}$, i.e., the
    function $F\left(\mathcal{S}\right)$ returns members of
    $\mathcal{S}_0$ that are unbroken in $\mathcal{S}$.}

\end{definition}
We propose a heuristic clustering algorithm, described in
Alg.~\ref{listing:sid}, for solving Prob.~\ref{prob:clustering}. The
algorithm first sorts the attributes by iteratively moving attributes
to singleton clusters such that the accuracy is maximised at each step
(lines 1--6). After this the algorithm finds the $k$-segmentation of
the ordered set of attributes (lines 7--9) corresponding to the
solution of Prob.~\ref{prob:clustering} (lines 10--11). Even though
the algorithm is heuristic in the general case, it provides an exact
solution if the reward function is monotonic.
\begin{theorem}
  \label{thm:optimal}
  Alg. \ref{listing:sid} solves Prob.~\ref{prob:clustering} exactly if
  the reward function $\hat T$ is monotonic.
\end{theorem}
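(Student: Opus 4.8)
The plan is to follow the two phases of Alg.~\ref{listing:sid} and prove three things in turn: that the ordering phase (lines 1--6) lays the attributes out so that every group of the optimum is contiguous; that among all $k$-segmentations the optimum is the unique reward-maximiser; and that the segmentation phase (lines 7--9) therefore returns it. Throughout, let $\mathcal{S}_0$ be the solution to Prob.~\ref{prob:clustering}, and recall from Def.~\ref{def:mono} that $F(\mathcal{S})$ collects exactly the groups of $\mathcal{S}_0$ left intact by $\mathcal{S}$, so $F(\mathcal{S}_0)=\mathcal{S}_0$ is the set-theoretically largest value $F$ can attain.

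The easy half is the segmentation phase. First I would record a combinatorial fact: if $\mathcal{S}$ is any partition with $|\mathcal{S}|=k=|\mathcal{S}_0|$ that leaves every group of $\mathcal{S}_0$ intact, then $\mathcal{S}=\mathcal{S}_0$. This follows from a counting argument, since mapping each $Q\in\mathcal{S}_0$ to the unique block of $\mathcal{S}$ containing it is a surjection between two $k$-element sets, hence a bijection, which forces the partitions to coincide. Consequently every $k$-segmentation $\mathcal{S}\ne\mathcal{S}_0$ breaks at least one group, i.e.\ $F(\mathcal{S})\subsetneq F(\mathcal{S}_0)$, and monotonicity gives $\hat T(\mathcal{S})<\hat T(\mathcal{S}_0)$. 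Thus, provided $\mathcal{S}_0$ is reachable as a $k$-segmentation of the produced order, the segmentation phase, which searches the $k$-segmentations of that order for the one of highest reward, must return exactly $\mathcal{S}_0$; the strictness of the inequality makes this independent of tie-breaking.

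The hard part, and the crux, is showing that the order produced in lines 1--6 places each group of $\mathcal{S}_0$ in one contiguous run, so that $\mathcal{S}_0$ is a feasible $k$-segmentation. The key observation is that for a staircase partition $\{\{a_1\},\dots,\{a_j\},R_j\}$ (the elements already pulled into singletons, plus the remaining block $R_j$) a group $Q\in\mathcal{S}_0$ is intact precisely when $Q$ is a singleton group or $Q\subseteq R_j$; hence removing an element strictly shrinks $F$ iff that element lies in a multi-element group none of whose members has yet been removed. Monotonicity then makes the greedy step strictly prefer a move that does not break a currently intact group over any move that does. From this I would extract an invariant, proved by induction on the number of removed elements: while no group is broken a singleton move is always available and strictly dominates, so the process first exhausts all singleton groups; and once it is forced to open a multi-element group $Q$, finishing $Q$ is a non-breaking move whereas touching any other still-intact group is breaking, so it empties $Q$ completely before opening the next group. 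Hence at every forced break the removed elements form complete groups, and each multi-element group appears as an uninterrupted block. Ruling out interleaving — showing that no foreign element can slip between the members of a group that is mid-removal — is exactly where the care is needed.

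Finally I would assemble the pieces. By the ordering analysis $\mathcal{S}_0$ is obtained by cutting the produced order at its $k$ group boundaries, so it is a feasible $k$-segmentation; by the segmentation analysis it is the unique reward-maximiser among all $k$-segmentations, hence the output of Alg.~\ref{listing:sid}. Since $\mathcal{S}_0$ is by definition the global solution to Prob.~\ref{prob:clustering}, the algorithm is exact, which is the claim.
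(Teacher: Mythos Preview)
Your treatment of the sorting phase (lines 1--6) is correct and in fact more careful than the paper's own argument, which simply asserts that once a group $S$ is opened the next pick must come from $S\cap C$ without addressing the possibility that a still-unprocessed singleton would tie it under monotonicity; your singletons-first invariant closes that loophole.

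The gap is in the segmentation phase. You write that the algorithm ``searches the $k$-segmentations of that order for the one of highest reward,'' but that is not what lines 7--9 do. Line 7 computes the reward $t_i=\hat T(\{\{a_1,\ldots,a_i\},\{a_{i+1},\ldots,a_m\}\})$ of each of the $m-1$ \emph{two}-block segmentations, and line 9 takes the $k-1$ indices with the largest such values as cut points. Your argument --- that $\mathcal{S}_0$ is the unique reward-maximiser among all $k$-segmentations of the order --- is true, but it establishes correctness of an exhaustive search the algorithm never performs; it says nothing about why the top $k-1$ two-block scores land at the right boundaries.

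What you actually need is the paper's observation: a single cut at position $i$ leaves every group of $\mathcal{S}_0$ intact (so $F(\{\{a_1,\ldots,a_i\},\{a_{i+1},\ldots,a_m\}\})=\mathcal{S}_0$) precisely when $i$ is one of the $k-1$ true group boundaries in the contiguous order, and breaks at least one group otherwise. Monotonicity then gives that every true-boundary $t_i$ strictly exceeds every non-boundary $t_i$, so the $k-1$ largest values among $t_1,\ldots,t_{m-1}$ occur exactly at the $k-1$ true boundaries, and line 10 reconstructs $\mathcal{S}_0$. Swap your $k$-segmentation maximisation claim for this two-segmentation comparison and the proof is complete.
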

\begin{proof}
Denote by $\mathcal{S}_0$ the solution to
Prob.~\ref{prob:clustering}. In the sorting phase of
Alg.~\ref{listing:sid} (lines 1--6) the attributes are ordered into a
vector $a_1,\ldots,a_m$ so that all clusters $S\in \mathcal{S}_0$
appear in a continuous segment. This follows directly from the
monotonicity: consider an iteration of this loop (lines 3--5). Let
$S\in\mathcal{S}_0$ be the cluster that contains $j$ obtained in line
3, i.e., $j\in S$. If we have not yet processed all attributes in $S$,
i.e., if after line 4 $S\cap C\ne\emptyset$, then in the next
iteration of the loop we must choose a value of $j$ from $S\cap C$,
because from monotonicity it follows that choosing $j$ not in $S\cap
C$ would result in a lower reward than choosing $j$ from $S\cap C$.

Therefore, because the vector $a_1,\ldots,a_m$ contains the clusters
in $\mathcal{S}_0$ in continuous segments, the problem reduces to finding
$k-1$ segment boundaries that split the vector into $k$ segments, with
each segment corresponding to a cluster in $\mathcal{S}_0$. We can make
an observation that if we split the attributes in $[m]$ into two
clusters $L\subseteq [m]$ and $[m]\setminus L$, then the reward $\hat
T\left(\{L,[m]\setminus L\}\right)$ is larger if the split into two
cluster does not break clusters in $\mathcal{S}_0$. In other words, $\hat
T\left(\{L,[m]\setminus L\}\right)>\hat T\left(\{L',[m]\setminus
L'\}\right)$ if there is a subset of clusters $R\subseteq \mathcal{S}_0$
such that $L=\cup_{X\in R}{X}$ and there is no subset of clusters
$R'\subseteq\mathcal{S}_0$ such that $L'=\cup_{X\in R'}{X}$. Therefore,
it suffices to compute the costs of all segmentations into two (line
7), and pick the $k-1$ segment boundaries corresponding to the highest
rewards (line 9). The resulting segments, defined in line 10, must
then correspond to the clusters in $\mathcal{S}_0$.
\end{proof}
It should be noted that the reward function we use here is not
monotonic, but the assumption of monotonicity appears to hold well in
practice. This can be compared, e.g, to the assumption of normality of
data: even though the assumption does not hold exactly it is still a
useful approximation.

Notice that Alg.~\ref{listing:sid} can also be used to efficiently
give the clustering for all values of $k$ since lines 1--7 are common
for all values of $k$ and only lines 8--11 need to be re-run for
different values of $k$. The clustering algorithm therefore requires
only $\bigoh\left(m^2\right)$ evaluations of the reward function to
find clusterings for all values of $k\in[m]$.

Moreover, usually clustering cost functions are defined in terms of
distances between cluster centroids or data points. In our case we do
not, however, have any well-defined distances between data points and,
hence, normal clustering algorithms are not applicable. Instead, we
have to find the correct grouping based on the value of the reward
(cost) function alone, which makes the problem more challenging.
However, the monotonicity assumption of Def.~\ref{def:mono} allows us,
in fact, to find optimal solutions in polynomial time. To the best of
our knowledge, this particular approach to finding clusterings has not
been considered previously, and we think it may have applications also
in other contexts.

\begin{algorithm2e}[t!]
  \SetKwInOut{Input}{input}
  \SetKwInOut{Output}{output}
  \Input{
    $k$, $m$, and $\hat T$ as defined in Prob. \ref{prob:clustering}}
  \Output{partition $\mathcal{S}$ of size $k$ maximising $\hat T(\mathcal{S})$}
  \tcc{Sorting}
  Let $B\leftarrow\emptyset$, $C\leftarrow [m]$\;
  \For{$i=1$ to $m$}{
    Let $j\leftarrow\arg\max_{j\in C}\hat T(\{\{C\setminus\{j\}\}\cup\cup_{l\in B\cup\{j\}}\{\{l\}\})$\;
    Let $B\leftarrow B\cup\{j\}$, $C\leftarrow C\setminus\{j\}$\;
    Let $a_i\leftarrow j$\;
  }
  \tcc{Grouping}
  Let $t_i\leftarrow \hat T(\{\{a_1,\ldots,a_i\},\{a_{i+1},\ldots,a_m\}\})$ for all $i\in[m-1]$\;
  Let $i_0\leftarrow 0$, $i_k\leftarrow m$\;
  Let $i_1<\ldots<i_{k-1}$ be such that $\{t_{i_1},\ldots,t_{i_{k-1}}\}$ are the $k-1$ largest values of $\{t_1,\ldots,t_{m-1}\}$\; 
  Let $S_j\leftarrow\{a_{i_{j-1}+1},\ldots,a_{i_j}\}$ for all $j\in[k]$\;
    Let $\mathcal{S}\leftarrow\{S_1,\ldots,S_k\}$\;
  \Return{$\mathcal{S}$}
  \caption{\label{listing:sid}
The clustering algorithm.}
\end{algorithm2e}

% ==============================
\section{Experiments}
\label{sec:experiments}

\noindent\textbf{Experimental setup} We evaluate the method proposed
in this paper empirically by addressing three case examples that demonstrate
the utility of our method. More specifically, we show that the
proposed \astrid method allows us to (1) identify attribute interactions
modelled by the classifier in a dataset, (2) generate (anonymised)
surrogate datasets with the same conditional distribution as an
original dataset, and (3) fuse datasets from different sources.

In the experiments we use the synthetic dataset
(Fig. \ref{fig:data:synthetic}) and 9 datasets from the UCI machine
learning repository \cite{bache:2014:a}\footnote{Datasets obtained
  from \url{http://www.cs.waikato.ac.nz/ml/weka/datasets.html}}. All
experiments were run in R \citep{R:2015:a} and the method is released
as the \astrid R-package. The \astrid R-package and the source code
for the experiments are available for
download\footnote{\url{https://github.com/bwrc/astrid-r}}.

All statistical significance testing in the experiments is conducted
at the $\alpha = 0.05$ level. We use a value of $R = 250$ and $R'=
100$ in Eqs.~\eqref{eq:empiricalp} and \eqref{eq:that},
respectively. In all experiments the dataset was randomly split as
follows: 50\% for training ($D_0$) and the rest for testing datasets
($D_{\mathrm{test}}$, see Eq. \eqref{eq:teststatistic}): 25\% for
computing of $\hat T$ (Eq. \eqref{eq:that}), and 25\% to find the
highest-cardinality grouping satisfying $p_\mathcal{S}\geq \alpha$
from among the results of the \astrid method. \\

\noindent\textbf{Classifiers} Classifier choice is important. The SVM
and random forest classifiers are among the best-performing
classifiers \citep{delgado:2014:a} and we hence use these (SVM with
RBF kernel from the \texttt{e1071} R-package \citep{e1071:2014:a} and
random forest from the \texttt{randomForest}
\citep{randomforest:2002:a} package). We also show some examples with
a \nb classifier (from \texttt{e1071}). All classifiers were used at
their default settings.  \\

\noindent\textbf{Datasets} The properties of the datasets are
summarised in Table~\ref{tab:datasets}, also showing the computation
time for the SVM and RF using unoptimised R-code. The computation time
of the \astrid method depends both on the properties of the dataset
such as the number of attributes and instances, and on the used
classifier. The UCI datasets were chosen to have at least 600 items
and so that the SVM and random forest classifiers achieve reasonably
good accuracy at default settings, since the goal here is to
demonstrate the applicability of the method rather than optimise
classifier performance. Rows with missing values and constant-value
columns were removed from the UCI datasets.

\begin{table}[ht]
\setlength{\tabcolsep}{1.2ex} 
  \centering
\caption{The datasets used in the experiments (2--10 from
  UCI). Columns as follows: Number of items (Ni) after removal of rows
  with missing values, number of classes (Nc) after removal of
  constant-value columns, number of attributes (Na). MCP is major
  class proportion. \textbf{T$_\mathrm{\textbf{SVM}}$} and
  \textbf{T$_\mathrm{\textbf{RF}}$} give the calculation in minutes of the \astrid method
  for the SVM and random forest, respectively.}
\label{tab:datasets}
\normalsize{
      \begin{tabularx}{\columnwidth}{clccccrr}
  \toprule
  \textbf{n} & \textbf{Dataset} & \textbf{Ni} & \textbf{Nc} & \textbf{Na} & \textbf{MCP} & \textbf{T$_\mathrm{\textbf{SVM}}$} & \textbf{T$_\mathrm{\textbf{RF}}$} \\ 
  \midrule
  1  & \texttt{synthetic} & 1000 & 2 & 4 & 0.50 & 0.4 & 1.1 \\
  2 & \texttt{balance-scale} & 625 & 3 & 4 & 0.46 & 0.4 & 1.3 \\ 
  3 & \texttt{diabetes} & 768 & 2 & 8 & 0.65 & 1.7 & 4.4 \\ 
  4 & \texttt{vowel} & 990 & 11 & 13 & 0.09 & 9.3 & 223.2 \\ 
  5 & \texttt{credit-a} & 653 & 2 & 15 & 0.55 & 5.6 & 10.0 \\ 
  6 & \texttt{segment} & 2310 & 7 & 18 & 0.14 & 24.8 & 45.3 \\ 
  7 & \texttt{vehicle} & 846 & 4 & 18 & 0.26 & 11.5 & 18.8 \\ 
  8 & \texttt{mushroom} & 5644 & 2 & 21 & 0.62 & 74.9 & 64.1 \\ 
  9 & \texttt{soybean} & 682 & 19 & 35 & 0.13 & 47.9 & 82.5 \\ 
  10 & \texttt{kr-vs-kp} & 3196 & 2 & 36 & 0.52 & 221.9 & 189.0 \\ 
  \bottomrule
   % \end{tabular}
      \end{tabularx}
}
\end{table}

\section{Results}
\label{sec:results}

% ------------------------------

\begin{table}[t!]
\centering
\caption{The \textrm{synthetic} dataset. The columns show the
  cardinality of the grouping ($k$), the average, minimum and maximum
  accuracy (acc) and its standard deviation (sd) from the random
  samples, and the $p$-value.}
\label{res:tab:synthetic}

\begin{subtable}{\columnwidth}
\centering
\caption{SVM}
\label{res:tab:synthetic:svm}
\small{
\begin{tabular}{cccccccccc} 
\toprule 
\textbf{k} & \textbf{acc$_\mathrm{ave}$} & \textbf{acc$_\mathrm{min}$} & \textbf{acc$_\mathrm{max}$} & \textbf{sd} & \textbf{p} & \rotatebox{90}{a4} & \rotatebox{90}{a3} & \rotatebox{90}{a2} & \rotatebox{90}{a1}\\ 
\cmidrule(r){1-6} 
\cmidrule(l){7-10} 
1 & 0.904 &  &  &  &  & (A & A & A & A)\\ 
2 & 0.904 & 0.876 & 0.924 & 0.009 & 0.614 & (A) & (B & B & B)\\ 
\rowcolor{cbluelight}
3 & 0.899 & 0.876 & 0.920 & 0.008 & 0.378 & (A) & (B) & (C & C)\\ 
4 & 0.738 & 0.652 & 0.824 & 0.031 & 0.004 & (A) & (B) & (C) & (D)\\ 
\bottomrule 
\end{tabular} 
}
\\ $ \mathcal{S} = \set{\set{1,2}, \set{3}, \set{4}}$
\end{subtable}

\vspace*{1em}
\begin{subtable}{\columnwidth}
\centering
\caption{Random forest}
\label{res:tab:synthetic:rf}
\small{
\begin{tabular}{cccccccccc} 
\toprule 
\textbf{k} & \textbf{acc$_\mathrm{ave}$} & \textbf{acc$_\mathrm{min}$} & \textbf{acc$_\mathrm{max}$} & \textbf{sd} & \textbf{p} & \rotatebox{90}{a4} & \rotatebox{90}{a3} & \rotatebox{90}{a1} & \rotatebox{90}{a2}\\ 
\cmidrule(r){1-6} 
\cmidrule(l){7-10} 
1 & 0.904 &  &  &  &  & (A & A & A & A)\\ 
2 & 0.902 & 0.880 & 0.928 & 0.008 & 0.486 & (A) & (B & B & B)\\ 
\rowcolor{cbluelight}
3 & 0.905 & 0.876 & 0.928 & 0.010 & 0.594 & (A) & (B) & (C & C)\\ 
4 & 0.720 & 0.652 & 0.788 & 0.027 & 0.004 & (A) & (B) & (C) & (D)\\ 
\bottomrule 
\end{tabular} \\
$\mathcal{S} = \set{\set{1,2}, \set{3}, \set{4}}$
}
\end{subtable}

\vspace*{1em}
\begin{subtable}{\columnwidth}
  \centering
\caption{Na\"ive Bayes}
\label{res:tab:synthetic:nb}
\small{
\begin{tabular}{cccccccccc} 
\toprule 
\textbf{k} & \textbf{acc$_\mathrm{ave}$} & \textbf{acc$_\mathrm{min}$} & \textbf{acc$_\mathrm{max}$} & \textbf{sd} & \textbf{p} & \rotatebox{90}{a1} & \rotatebox{90}{a2} & \rotatebox{90}{a3} & \rotatebox{90}{a4}\\ 
\cmidrule(r){1-6} 
\cmidrule(l){7-10} 
1 & 0.756 &  &  &  &  & (A & A & A & A)\\ 
2 & 0.756 & 0.756 & 0.756 & 0.000 & 1.000 & (A) & (B & B & B)\\ 
3 & 0.756 & 0.756 & 0.756 & 0.000 & 1.000 & (A) & (B) & (C & C)\\ 
\rowcolor{cbluelight}
4 & 0.756 & 0.756 & 0.756 & 0.000 & 1.000 & (A) & (B) & (C) & (D)\\ 
\bottomrule 
\end{tabular} \\
$\mathcal{S} = \set{\set{1}, \set{2}, \set{3}, \set{4}}$
}
\end{subtable}
\end{table}

\begin{table*} 
\centering 
\caption{Grouping of the \textrm{credit-a} dataset using SVM. Column headers as in Table~\ref{res:tab:synthetic}.}
\label{res:tab:credit-a:svm} 
\normalsize{
\begin{tabular}{ccccccccccccccccccccc}
\toprule 
\textbf{k} & \textbf{acc$_\mathrm{ave}$} & \textbf{acc$_\mathrm{min}$} & \textbf{acc$_\mathrm{max}$} & \textbf{sd} & \textbf{p} & \rotatebox{90}{A14} & \rotatebox{90}{A7} & \rotatebox{90}{A13} & \rotatebox{90}{A1} & \rotatebox{90}{A6} & \rotatebox{90}{A12} & \rotatebox{90}{A15} & \rotatebox{90}{A4} & \rotatebox{90}{A5} & \rotatebox{90}{A2} & \rotatebox{90}{A8} & \rotatebox{90}{A3} & \rotatebox{90}{A9} & \rotatebox{90}{A10} & \rotatebox{90}{A11}\\ 
\cmidrule(lr){1-6} 
\cmidrule(lr){7-21} 
1 & 0.871 &  &  &  &  & (A & A & A & A & A & A & A & A & A & A & A & A & A & A & A)\\ 
2 & 0.868 & 0.853 & 0.883 & 0.006 & 0.570 & (A & A & A) & (B & B & B & B & B & B & B & B & B & B & B & B)\\ 
3 & 0.866 & 0.847 & 0.883 & 0.006 & 0.371 & (A & A & A) & (B & B) & (C & C & C & C & C & C & C & C & C & C)\\ 
4 & 0.867 & 0.853 & 0.883 & 0.006 & 0.434 & (A & A & A) & (B & B) & (C) & (D & D & D & D & D & D & D & D & D)\\ 
5 & 0.867 & 0.847 & 0.883 & 0.007 & 0.426 & (A & A & A) & (B) & (C) & (D) & (E & E & E & E & E & E & E & E & E)\\ 
6 & 0.867 & 0.847 & 0.883 & 0.006 & 0.430 & (A & A) & (B) & (C) & (D) & (E) & (F & F & F & F & F & F & F & F & F)\\ 
7 & 0.867 & 0.847 & 0.883 & 0.007 & 0.478 & (A) & (B) & (C) & (D) & (E) & (F) & (G & G & G & G & G & G & G & G & G)\\ 
8 & 0.868 & 0.847 & 0.890 & 0.008 & 0.518 & (A) & (B) & (C) & (D) & (E) & (F) & (G & G & G & G) & (H & H & H & H & H)\\ 
9 & 0.868 & 0.847 & 0.883 & 0.007 & 0.514 & (A) & (B) & (C) & (D) & (E) & (F) & (G & G & G) & (H) & (I & I & I & I & I)\\ 
10 & 0.868 & 0.847 & 0.883 & 0.008 & 0.510 & (A) & (B) & (C) & (D) & (E) & (F) & (G) & (H & H) & (I) & (J & J & J & J & J)\\ 
\rowcolor{cbluelight}
11 & 0.861 & 0.834 & 0.883 & 0.010 & 0.267 & (A) & (B) & (C) & (D) & (E) & (F) & (G) & (H & H) & (I) & (J) & (K & K & K & K)\\ 
12 & 0.856 & 0.834 & 0.877 & 0.007 & 0.056 & (A) & (B) & (C) & (D) & (E) & (F) & (G) & (H) & (I) & (J) & (K) & (L & L & L & L)\\ 
13 & 0.847 & 0.822 & 0.877 & 0.009 & 0.024 & (A) & (B) & (C) & (D) & (E) & (F) & (G) & (H) & (I) & (J) & (K) & (L) & (M & M & M)\\ 
14 & 0.846 & 0.822 & 0.871 & 0.009 & 0.012 & (A) & (B) & (C) & (D) & (E) & (F) & (G) & (H) & (I) & (J) & (K) & (L) & (M) & (N & N)\\ 
15 & 0.847 & 0.816 & 0.877 & 0.012 & 0.028 & (A) & (B) & (C) & (D) & (E) & (F) & (G) & (H) & (I) & (J) & (K) & (L) & (M) & (N) & (O)\\ 
\bottomrule 
\end{tabular}
}
\flushleft{
  \hspace*{3em} $\mathcal{S}_{5} \phantom{i} = \set{\set{14, 7, 13}, \set{1}, \set{6}, \set{12}, \set{15, 4, 5, 2, 8, 3, 9, 10, 11}}$ \\
\hspace*{3em}  $\mathcal{S}_{11} = \set{\set{14}, \set{7}, \set{13}, \set{1}, \set{6}, \set{12}, \set{15}, \set{4, 5}, \set{2}, \set{8}, \set{3, 9, 10, 11}} $
  }
\end{table*}

\setlength{\tabcolsep}{0.6ex} 

\begin{table}[t!]
  \centering
\caption{Groupings for the synthetic and UCI datasets using SVM and
  random forest. The columns are as follows. The number of attributes
  in the dataset (\emph{N}), the size of the grouping (\emph{k}), the
  size of the largest ($N_1$) and second-largest ($N_2$) groups, and
  the significance of the grouping (\emph{p}). The other columns are:
  baseline accuracy when the classifier is trained with unshuffled
  data (a${}_0$) and with data shuffled using the found grouping
  (\emph{a}), the range of the accuracy (a${}_\textrm{range}$) and its
  standard deviation (a${}_\textrm{sd}$). The column $p_\textrm{OG}$
  is the $p$-value corresponding to Test 2 in \cite{ojala2010jmlr}.}

\label{res:tab:uci} 

  \begin{subtable}{\columnwidth}
    \centering
\caption{SVM}
\label{res:tab:uci:svm}
\small{
\begin{tabularx}{\columnwidth}{l  cccc rrr c rr}

    \toprule \textbf{Dataset}
    & \multicolumn{1}{c}{\textbf{N}} & \multicolumn{1}{c}{\textbf{k}}
    & \multicolumn{1}{c}{\textbf{N${}_1$}}
    & \multicolumn{1}{c}{\textbf{N$_2$}}
& \multicolumn{1}{c}{\textbf{p}}
    & \multicolumn{1}{c}{\textbf{a${}_0$}}
    & \multicolumn{1}{c}{\textbf{a}}
    & \multicolumn{1}{c}{\textbf{a${}_\textrm{range}$}}
    & \multicolumn{1}{c}{\textbf{a${}_\textrm{sd}$}}
    & \multicolumn{1}{c}{\textbf{p}$_\textrm{OG}$}\\ \midrule
\textbf{ balance-scale } &  4 & 3 & 2 & 1 & 0.12 & 0.89 & 0.86 & [0.78, 0.90] & 0.02 & 0.02\\
\textbf{ credit-a } &  15 & 12 & 4 & 1 & 0.06 & 0.87 & 0.86 & [0.83, 0.88] & 0.01 & 0.02\\
\textbf{ diabetes } &  8 & 8 & 1 & 1 & 0.59 & 0.71 & 0.71 & [0.66, 0.74] & 0.02 & 0.62\\
\textbf{ kr-vs-kp } &  36 & 33 & 4 & 1 & 1.00 & 0.92 & 0.92 & [0.92, 0.92] & 0.00 & 0.00\\
\textbf{ mushroom } &  21 & 15 & 7 & 1 & 0.14 & 1.00 & 0.99 & [0.99, 1.00] & 0.00 & 0.00\\
\textbf{ segment } &  18 & 4 & 15 & 1 & 0.05 & 0.95 & 0.94 & [0.92, 0.95] & 0.00 & 0.00\\
\textbf{ soybean } &  35 & 35 & 1 & 1 & 0.31 & 0.84 & 0.84 & [0.80, 0.86] & 0.01 & 0.29\\
\textbf{ vehicle } &  18 & 4 & 15 & 1 & 0.17 & 0.77 & 0.75 & [0.70, 0.79] & 0.02 & 0.00\\
\textbf{ vowel } &  13 & 3 & 11 & 1 & 0.06 & 0.81 & 0.78 & [0.74, 0.82] & 0.01 & 0.00\\
% -----
      \bottomrule
    \end{tabularx}
}
  \end{subtable}
  \vspace*{1em}\\
  \begin{subtable}{\columnwidth}
    \centering
\caption{Random forest}
\label{res:tab:uci:rf}

  \small{
  \begin{tabularx}{\columnwidth}{l cccc rrr c rr}
      \toprule
      \textbf{Dataset} 
& \multicolumn{1}{c}{\textbf{N}} 
& \multicolumn{1}{c}{\textbf{k}} 
& \multicolumn{1}{c}{\textbf{N${}_1$}} 
& \multicolumn{1}{c}{\textbf{N$_2$}} 
& \multicolumn{1}{c}{\textbf{p}} 
& \multicolumn{1}{c}{\textbf{a${}_0$}} 
& \multicolumn{1}{c}{\textbf{a}} 
& \multicolumn{1}{c}{\textbf{a${}_\textrm{range}$}} 
& \multicolumn{1}{c}{\textbf{a${}_\textrm{sd}$}} 
& \multicolumn{1}{c}{\textbf{p}$_\textrm{OG}$}\\
      \midrule
\textbf{ balance-scale } &  4 & 3 & 2 & 1 & 0.20 & 0.82 & 0.78 & [0.71, 0.84] & 0.03 & 0.01\\
\textbf{ credit-a } &  15 & 15 & 1 & 1 & 0.60 & 0.88 & 0.87 & [0.82, 0.91] & 0.01 & 0.18\\
\textbf{ diabetes } &  8 & 8 & 1 & 1 & 0.97 & 0.70 & 0.72 & [0.69, 0.75] & 0.01 & 0.90\\
\textbf{ kr-vs-kp } &  36 & 12 & 24 & 2 & 0.06 & 0.98 & 0.98 & [0.97, 0.98] & 0.00 & 0.00\\
\textbf{ mushroom } &  21 & 14 & 8 & 1 & 0.18 & 1.00 & 1.00 & [0.99, 1.00] & 0.00 & 0.00\\
\textbf{ segment } &  18 & 6 & 13 & 1 & 0.22 & 0.99 & 0.98 & [0.97, 0.99] & 0.00 & 0.00\\
\textbf{ soybean } &  35 & 23 & 11 & 3 & 0.05 & 0.96 & 0.95 & [0.93, 0.96] & 0.01 & 0.00\\
\textbf{ vehicle } &  18 & 6 & 12 & 2 & 0.09 & 0.75 & 0.72 & [0.69, 0.76] & 0.01 & 0.00\\
\textbf{ vowel } &  13 & 4 & 10 & 1 & 0.12 & 0.92 & 0.90 & [0.88, 0.93] & 0.01 & 0.00\\
% -----
      \bottomrule
      \end{tabularx}
  }
  \end{subtable}
\end{table}
\setlength{\tabcolsep}{6pt} 

% ------------------------------
We now present empirical results obtained using the \astrid method
and demonstrate the usefulness and applicability of the method in
three real-world contexts.

\subsection{Finding Attribute Interactions}
The results are presented as tables where columns represent
attributes, sorted in the order in which attributes were detached
in the sorting step. Each row represents a grouping. Attributes
belonging to the same group are marked with the same letter, i.e.,
attributes marked with the same letter on the same row are
interacting. The row with the highest-cardinality grouping for which
$p \geq 0.05$ is highlighted and this grouping is also shown below the
table.

Table~\ref{res:tab:synthetic} shows the results for the synthetic
dataset. The groupings of size $k=2$ and $k=3$ are valid ($p \geq
0.05$) for SVM and random forest. For $k=4$ it is clear that the
accuracy is lower for SVM and random forest, whereas for \nb also
$k=4$ is valid. The SVM and random forest classifiers both identify
the correct interaction structure of the dataset. In contrast, the \nb
classifier always assumes that each attribute is independent and all
groupings are equally valid as no interactions are utilised and hence
all groupings $k=1,\ldots,4$ yield the same accuracy. These results
mean that an SVM trained on the permuted synthetic dataset
using $\mathcal{S} = \set{\set{1,2}, \set{3}, \set{4}}$ is
indistinguishable (at the 5\% level) from an SVM trained on the
original synthetic dataset. We hence find the factorised form of the
joint distribution of the data.

Table~\ref{res:tab:credit-a:svm} shows the valid groupings of the
\texttt{credit-a} dataset using SVM. The results indicate that the SVM
classifier does not utilise many attribute interactions, and hence the
dataset can be permuted to a large extent without impacting classifier
performance. As an example, the grouping with $k = 11$ contains two
groups of interacting attribute, marked with \emph{H} and \emph{K},
respectively, while the other attributes are singletons. In variable
subset selection, as discussed in Example 3 in
Sec.~\ref{sec:introduction}, it is meaningful to pick attributes based
on their interaction. Here it would hence be meaningful to consider
the attributes in the \emph{H} and \emph{K} groups for $k = 11$ when
selecting features, since the attributes in these groups are likely
interacting.

The groupings for our datasets are summarised in
Table~\ref{res:tab:uci}, showing the properties in terms of the number
of attributes used by the classifiers and the number of groups and
singletons together with statistics describing the accuracy of the
classifier using the found grouping. The groupings for SVM and
random forest are in general similar in terms of the size of the found
grouping, $k$ in Table~\ref{res:tab:uci}, and the number of
singletons. However, there are some exceptions such as the
\texttt{kr-vs-kp} and \texttt{soybean} datasets. Also, the structures
utilised by the classifiers is somewhat different.

To compare our results with those of \cite{ojala2010jmlr},
investigating whether a classifier utilises attribute interactions, we
computed the $p$-value for their Test 2 (denoted $p_\mathrm{OG}$ in
Table~\ref{res:tab:uci}), which is equivalent to our Problem
\ref{prob:structuretest} with an all-singleton grouping.
$p_\mathrm{OG} \geq 0.05$ indicates that the classifier does not
utilise attribute interactions in the dataset. This occurs for
\texttt{diabetes} and \texttt{soybean} for SVM and for
\texttt{diabetes} and \texttt{credit-a} for random forest. This is in
line with our findings, since for these datasets $k$ equals $N$ in
Table~\ref{res:tab:uci} and no interactions are hence utilised.

\subsection{Anonymisation and Surrogate Data}
Data anonymisation (e.g., \cite{AgrawalS00,sweeney:2002:a,BayardoS03})
and synthetic data generation are related processes with the goal of
generating data that shares properties with the original data so that
the anonymised data can be used in place of the original
data. Anonymisation can be done in a principled way if we know the
attribute interactions in the dataset. One method of data
anonymisation is based on shuffling the data, which we employ here. As
a measure of anonymity we compute the proportion $P_\mathrm{anon}$ of
rows of the original dataset that are still intact after the shuffling.
Ideally the shuffled data has no rows that were present in
the original data.

We consider the \texttt{credit-a} dataset. It contains information
on credit card applications and has 15 attributes and two classes:
positive or negative decision. To illustrate how our method can be
applied to data anonymisation we here use the grouping for $k = 5$ in
Table~\ref{res:tab:credit-a:svm}, shown as $\mathcal{S}_{5}$ below the
table. The anonymised dataset is obtained from the original dataset by
permuting it with $\mathcal{S}_{5}$. The accuracy using original,
unshuffled data is $0.871$ and when the classifier is trained with
anonymised data the accuracy is $0.867$ (average of 100
repetitions). To test the quality of the anonymisation, we calculated
$P_\mathrm{anon}$ for 100 repetitions. The original training dataset
has 327 unique rows and we obtain an average $P_\mathrm{anon} =
0.1\%$, i.e., the anonymisation is very efficient,
and there is in practice no loss in classification accuracy.
Shuffling the data
using the \goldeneye permutation with $\mathcal{S}_{5}$ yields new
surrogate datasets with the same class-conditional distribution as the
original dataset.

\subsection{Efficient Data Fusion and Collection}
Assume that we have a small dataset and have computed its structure
using \astrid. Further assume that we want to collect more data to be
used as a training dataset for a classifier, and also assume that we
can assign class labels to data items from an external source.

An interesting implication of the method presented here is that it can
be used for efficient data fusion and collection. E.g., for the
\texttt{credit-a} dataset a data collection task can be split up into
the sets given by $\mathcal{S}_{5}$ in
Table~\ref{res:tab:credit-a:svm}. We may collect a new dataset by
letting one survey participant respond to only the attributes in some
of the sets in $\mathcal{S}_{5}$ and tag the answers with the class,
which for \texttt{credit-a} is the known credit
decision. Independently collected answers regarding different sets of
attributes can then be fused based on the class label and used to
train a new classifier. This works, because if the distribution of the
data obeys the class-conditional form of Eq.~\eqref{eq:jdfact}, then
it should not matter in training of a classifier if different
attribute groups on a row of the data matrix are in fact collected
from separate persons in the same class.

Collecting survey data is often costly and time-consuming as a large
number of people must answer a large number of questions. Subdividing
a survey therefore reduces answering time and costs. Using the above
described method we may hence speed up data collection for a training
dataset for a classifier.

In this manner new training data for a classifier can be collected
more efficiently, since it can be partitioned into independent sets
based on grouping of an initial dataset using \astrid.

\section{Discussion and Conclusion}
\label{sec:discussion}
We present an efficient framework for testing the hypothesis that the
class-conditional joint distribution of a dataset follows a specific
factorised form. The factorised joint data distribution tells us what
attribute interactions are used by a classifier and hence also what
the structure of the data is. Knowledge of the joint distribution is
important for data exploration and can be used in solving several
real-world data analysis problems, in this paper exemplified by
showing the utility in (i) finding attribute interactions
(applications to variable selection), (ii) data anonymisation and
generation of surrogate data, and (iii) data fusion. Our empirical
investigation shows that often many interactions in the datasets
that we considered can be broken substantially without affecting
classification performance.

The framework is realised in the \astrid method which is made
available as the \astrid
R-package\footnote{\url{https://github.com/bwrc/astrid-r}}. It
can be used to automatically find
the factorisation of highest granularity.
Both the framework and \astrid make no assumptions regarding the data
distribution or the used classifier and hence have high generic
applicability to different datasets and problems. The methods
presented here build upon and extend ideas discussed in
\cite{henelius:2014:peek, ojala2010jmlr}. However, unlike the
groupings found in, e.g., \cite{henelius:2014:peek}, the framework
presented here provides statistical guarantees.

When interpreting the results it is very important to also consider
the uniqueness and stability of the results, both of which depend on
several factors. The \astrid method is a randomised algorithm and the
found groupings are hence not necessarily unique. The stability of the
results depends on factors such as the size of the data and the
strength of the interactions in the data. This can be compared to,
e.g., clustering solutions found using $k$-means. Also, the results
are affected by the number of random samples ($R$ and $R'$ in
Eqs.~\eqref{eq:empiricalp} and \eqref{eq:that}), and for practical
applications a trade-off between accuracy and speed must be
made. However, it is important to pick a high enough number of random
samples as this, for instance, directly impacts the minimum obtainable
$p$-value. The \astrid method should be considered an explorative
technique for investigating the structure of the data.

The methods discussed in this paper rely on the assumption that the
classifier can learn the structure of the data. As an example, the
assumption of attribute independence of the \nb classifier means that
it is incapable of exploiting attribute interactions and all groupings
are hence equally valid, as exemplified in
Table~\ref{res:tab:synthetic}.

Furthermore, the relationship between statistical significance and
practical relevance should be considered. E.g., although a negligible
drop in performance due to factorisation of the joint distribution may
be statistically significant, such a decrease can be of little
practical relevance. Compare, e.g., the accuracies for $k=1$ and
$k=15$ in Table~\ref{res:tab:credit-a:svm}.

As an additional contribution and as a tool to solve our main problem,
we have introduced a novel clustering algorithm,
Alg.~\ref{listing:sid}, which is described in Sec.~\ref{sec:clustering}
which may have uses also in other contexts. The clustering algorithm
can be used if only the reward (or cost) of the clustering solution is
available and there is no natural distance measure between data
points, in which case the traditional clustering methods which use
inter-item distances are useless. Alg.~\ref{listing:sid} is heuristic
for a general clustering reward function, but as shown in
Thm.~\ref{thm:optimal}, the algorithm is optimal if the reward
function is monotonic, i.e., if the reward always decreases if one of
the ``true'' clusters is broken, which should intuitively be true
in many real situations.

A potential direction of future research is to extend the method to
regression problems, requiring a redefinition of the permutation
scheme so that the resulting datasets are samples from a factorised
distribution conditioned on the dependent variable of the regression
model. Also techniques to speed up the $p$-value computations by
reducing the number of random samples required are of
interest. Furthermore, the clustering problem
(Sec.~\ref{sec:clustering}) and the associated monotonicity
definition (Def.~\ref{def:mono}) may have applications also to other
problems.

%\newpage

% The following contains an example table and theorem from the template
%\input{samplebody-conf}

\begin{acks}
This work was supported by Academy of Finland (decision 288814) and
Tekes (Revolution of Knowledge Work project).
\end{acks}

\bibliographystyle{abbrv}
\bibliography{structure_identification}

\begin{thebibliography}{10}

\bibitem{AgrawalS00}
R.~Agrawal and R.~Srikant.
\newblock Privacy-preserving data mining.
\newblock In {\em SIGMOD 2000}, pages 439--450, 2000.

\bibitem{bache:2014:a}
K.~Bache and M.~Lichman.
\newblock {UCI} machine learning repository, 2014.

\bibitem{BayardoS03}
R.~J. {Bayardo Jr.} and R.~Srikant.
\newblock Technological solutions for protecting privacy.
\newblock {\em {IEEE} Computer}, 36(9):115--118, 2003.

\bibitem{bucila:2006:a}
C.~Bucil\~a, R.~Caruana, and A.~Niculescu-Mizil.
\newblock Model compression.
\newblock In {\em KDD 2006}, pages 535--541, 2006.

\bibitem{delgado:2014:a}
M.~Fern{\'a}ndez-Delgado, E.~Cernadas, S.~Barro, and D.~Amorim.
\newblock Do we need hundreds of classifiers to solve real world classification
  problems?
\newblock {\em Journal of Machine Learning Research}, 15:3133--3181, 2014.

\bibitem{freitas:2001:a}
A.~A. Freitas.
\newblock Understanding the crucial role of attribute interaction in data
  mining.
\newblock {\em Artificial Intelligence Review}, 16(3):177--199, 2001.

\bibitem{guyon:2003:a}
I.~Guyon and A.~Elisseeff.
\newblock An introduction to variable and feature selection.
\newblock {\em Journal of machine learning research}, 3(Mar):1157--1182, 2003.

\bibitem{henelius:2014:peek}
A.~Henelius, K.~Puolam{\"a}ki, H.~Bostr{\"o}m, L.~Asker, and P.~Papapetrou.
\newblock A peek into the black box: exploring classifiers by randomization.
\newblock {\em Data mining and knowledge discovery}, 28(5-6):1503--1529, 2014.

\bibitem{henelius:2015:gepp}
A.~Henelius, K.~Puolam{\"a}ki, I.~Karlsson, J.~Zhao, L.~Asker, H.~Bostr{\"o}m,
  and P.~Papapetrou.
\newblock Goldeneye++: A closer look into the black box.
\newblock In {\em Statistical Learning and Data Sciences}, pages 96--105.
  Springer, 2015.

\bibitem{jakulin:2003:a}
A.~Jakulin and I.~Bratko.
\newblock Analyzing attribute dependencies.
\newblock In {\em PKDD 2003}, pages 229--240. Springer, 2003.

\bibitem{jakulin:2004:a}
A.~Jakulin and I.~Bratko.
\newblock Testing the significance of attribute interactions.
\newblock In {\em ICML 2004}, 2004.

\bibitem{jakulin:2003:b}
A.~Jakulin, I.~Bratko, D.~Smrke, J.~Dem{\v{s}}ar, and B.~Zupan.
\newblock Attribute interactions in medical data analysis.
\newblock In {\em Conference on Artificial Intelligence in Medicine in Europe},
  pages 229--238. Springer, 2003.

\bibitem{koski:2012:a}
T.~J. Koski and J.~M. Noble.
\newblock A review of bayesian networks and structure learning.
\newblock {\em Annales Societatis Mathematicae Polonae. Series 3: Mathematica
  Applicanda}, 40(1):53--103, 2012.

\bibitem{randomforest:2002:a}
A.~Liaw and M.~Wiener.
\newblock Classification and regression by randomforest.
\newblock {\em R News}, 2(3):18--22, 2002.

\bibitem{mampaey:2013:a}
M.~Mampaey and J.~Vreeken.
\newblock Summarizing categorical data by clustering attributes.
\newblock {\em Data Mining and Knowledge Discovery}, 26(1):130--173, 2013.

\bibitem{e1071:2014:a}
D.~Meyer, E.~Dimitriadou, K.~Hornik, A.~Weingessel, and F.~Leisch.
\newblock {\em e1071: Misc Functions of the Department of Statistics (e1071),
  TU Wien}, 2014.
\newblock R package version 1.6-4.

\bibitem{ojala2010jmlr}
M.~Ojala and G.~C. Garriga.
\newblock Permutation tests for studying classifier performance.
\newblock {\em Journal of Machine Learning Research}, 11:1833--1863, 2010.

\bibitem{R:2015:a}
{R Core Team}.
\newblock {\em R: A Language and Environment for Statistical Computing}.
\newblock R Foundation for Statistical Computing, Vienna, Austria, 2015.

\bibitem{sweeney:2002:a}
L.~Sweeney.
\newblock k-anonymity: A model for protecting privacy.
\newblock {\em International Journal of Uncertainty, Fuzziness and
  Knowledge-Based Systems}, 10(05):557--570, 2002.

\bibitem{tatti:2011:a}
N.~Tatti.
\newblock Are your items in order.
\newblock In {\em SDM 2011}, pages 414--425. SIAM, 2011.

\bibitem{zhao:2007:a}
Z.~Zhao and H.~Liu.
\newblock Searching for interacting features.
\newblock In {\em IJCAI 2007}, pages 1156--1161, 2007.

\bibitem{zhao:2009:a}
Z.~Zhao and H.~Liu.
\newblock Searching for interacting features in subset selection.
\newblock {\em Intell. Data Anal.}, 13(2):207--228, 2009.

\end{thebibliography}

\end{document}